\theoremstyle{plain}
\newtheorem{theorem}{Theorem}[section]
\newtheorem{proposition}[theorem]{Proposition}
\theoremstyle{definition}
\theoremstyle{remark}
\preto\tabular{\setcounter{magicrownumbers}{0}}
\newcounter{magicrownumbers}
\newcommand\rownumber{\stepcounter{magicrownumbers}\arabic{magicrownumbers}}
\newcommand{\norm}[1]{\left\lVert#1\right\rVert}
\newcolumntype{K}{>{\centering\arraybackslash}m{.48in}}
\icmltitlerunning{How many perturbations break this model? Evaluating robustness beyond adversarial accuracy}
\begin{document}

\twocolumn[
\icmltitle{How Many Perturbations Break This Model? Evaluating Robustness Beyond Adversarial Accuracy}

% It is OKAY to include author information, even for blind
% submissions: the style file will automatically remove it for you
% unless you've provided the [accepted] option to the icml2023
% package.

% List of affiliations: The first argument should be a (short)
% identifier you will use later to specify author affiliations
% Academic affiliations should list Department, University, City, Region, Country
% Industry affiliations should list Company, City, Region, Country

% You can specify symbols, otherwise they are numbered in order.
% Ideally, you should not use this facility. Affiliations will be numbered
% in order of appearance and this is the preferred way.
%\icmlsetsymbol{equal}{*}

\begin{icmlauthorlist}
\icmlauthor{Raphael Olivier}{yyy}
\icmlauthor{Bhiksha Raj}{yyy}
\end{icmlauthorlist}

\icmlaffiliation{yyy}{Language Technologies Institute, Carnegie Mellon University, Pittsburgh, USA}

\icmlcorrespondingauthor{Raphael Olivier}{rolivier@cs.cmu.edu}

% You may provide any keywords that you
% find helpful for describing your paper; these are used to populate
% the ``keywords'' metadata in the PDF but will not be shown in the document
\icmlkeywords{Machine Learning, ICML}

\vskip 0.3in
]

% this must go after the closing bracket ] following \twocolumn[ ...

% This command actually creates the footnote in the first column
% listing the affiliations and the copyright notice.
% The command takes one argument, which is text to display at the start of the footnote.
% The \icmlEqualContribution command is standard text for equal contribution.
% Remove it (just {}) if you do not need this facility.

\printAffiliationsAndNotice{}  % leave blank if no need to mention equal contribution
% \printAffiliationsAndNotice{\icmlEqualContribution} % otherwise use the standard text.

%\maketitle

\begin{abstract}

  Robustness to adversarial attacks is typically evaluated with adversarial accuracy.
  %This metric quantifies the number of points for which, given a threat model, successful adversarial perturbations cannot be found. 
  While essential, this metric does not capture all aspects of robustness and in particular leaves out the question of how many perturbations can be found for each point.
In this work, we introduce an alternative approach, adversarial sparsity, which quantifies how difficult it is to find a successful perturbation given both an input point and a constraint on the direction of the perturbation. 
%This constraint may be angular (for $L_2$ perturbations) or based on the number of pixels ($L_\infty$ perturbations). 
We show that sparsity provides valuable insight into neural networks in multiple ways: for instance, it illustrates important differences between current state-of-the-art robust models them that accuracy analysis does not, and suggests approaches for improving their robustness. 
When applying “broken” defenses effective against weak attacks but not strong ones, sparsity can discriminate between the “totally ineffective” and the “partially effective” defenses. 
Finally, with sparsity we can measure increases in robustness that do not affect accuracy: we show for example that data augmentation can by itself increase adversarial robustness, without using adversarial training.
\end{abstract}

\section{Introduction}
\label{sec:intro}
\footnote{Our code is available at \url{https://github.com/RaphaelOlivier/sparsity}}Designing adversarially robust machine learning models has become one of the main objectives of the research community. Adversarial examples, these slightly perturbed inputs that pose significant problems for output prediction, are the source of multiple security threats: not only are they dangerous in and of themselves, but they also contribute to enabling data poisoning attacks \cite{Shafahi18} and even membership inference attacks \cite{pmlr-v139-choquette-choo21a}. Making models robust to adversarial perturbations is very difficult for multiple reasons; one of them is that the proper evaluation method of robustness is not a trivial problem.

A few years ago, it was common to evaluate defenses against a variety of adversarial attacks, like FGSM (\cite{goodfellow2014}), DeepFool (\cite{moosavi2016deepfool}) or JSMA (\cite{jsma}). Some defenses could claim good results on some attacks, but remained vulnerable to others (e.g. \citet{papernot15} broken in \citet{carlinidistillation}), suggesting multiple aspects to robustness and giving adversarial research the form of an ``arms race'' between attacks and defenses. Nowadays however, this approach has lost popularity and it is considered good practice to focus on the accepted strongest attacks for evaluation: typically PGD (\cite{madry18}) or its step-size free variant APGD \cite{croce2020reliable} for bounded attacks and Carlini\&Wagner (\cite{carlini16}) for unbounded attacks.

We argue that this approach ignores important aspects of adversarial robustness. PGD for instance is considered a surrogate to the worst-case accuracy given a fixed threat model. With input $x$, label $y$ and a set of admissible perturbations $\Delta$, worst case accuracy on $x$ is equal to 1 if:
\begin{equation}
\label{eqn:worstacc}
    \forall \delta \in \Delta,\; f(x+\delta)=y
\end{equation}
and 0 otherwise. This metric reflects whether, in the vicinity of an input point, there is \textit{one} successful perturbation. This can be misleading in the quest for robustness. Consider a hypothetical defense that, around every point, eliminates 99\% of all dangerous perturbations, leaving 1\% to find. Its worst-case accuracy would be $0\%$, just like an undefended model, and the defense would be considered ``broken'' by the research community. In other words, worst-case accuracy is biased towards defenses that are totally effective for some points, and against those that are partially effective around all points.
Whether a point has any or no adversarial perturbation is relevant knowledge, but it leaves out a lot of information. The ``broken'' defense above may for instance be useful in real-world contexts, where perturbations are harder to craft than on research datasets. Moreover, it may hypothetically be complementary to other ``broken'' defenses that eliminate a distinct subset of perturbations or improve a defense with non-zero adversarial accuracy.

%Coming up with finer robustness metrics however raises difficulties. One could be tempted to measure \textit{how hard} it is to find an adversarial example, by adjusting the quality of the surrogate attack. This leads to many papers reporting accuracy under the weak Fast-Gradient Sign-Method attack, or measuring the time it takes an attacker to find a successful adversarial perturbation. This approach is famously unsatisfying, as the defense's evaluation becomes entirely dependent on the attack. This typically leads to favor \textit{security through obscurity} by making gradient updates less informative, sometimes without protecting against any actual perturbation. 

\begin{figure*}
    \centering
    \includegraphics[width=0.3\linewidth]{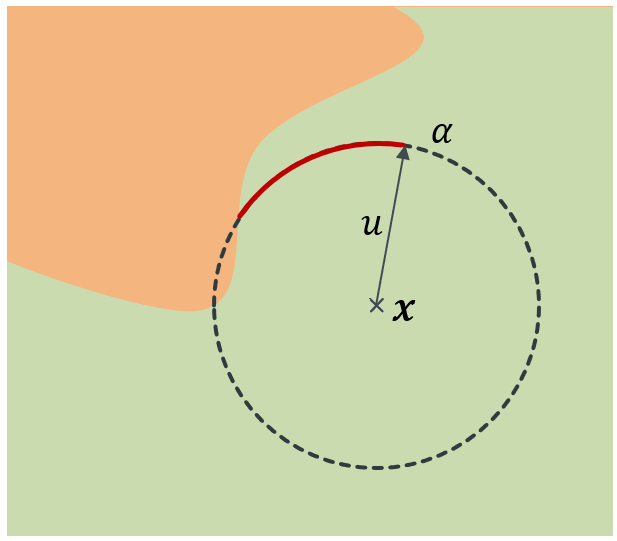}
     \hspace{0.1\linewidth}
    \includegraphics[width=0.3\linewidth]{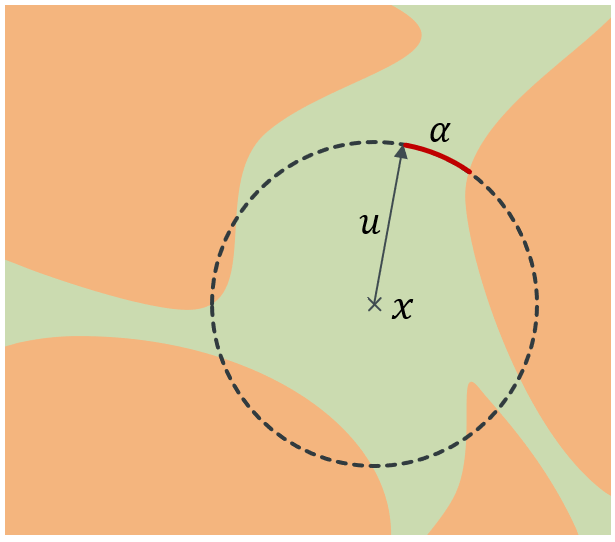}
    \caption{Representation of the decision boundary of some binary classification task (regions in green and orange) around point $x$ in two cases. In both cases, the model is not robust around $x$ with the represented $L_2$ radius $\epsilon$. However, there are many more adversarial perturbations on the right than on the left. Adversarial accuracy evaluates both cases at 0, but $\alpha$, the angle between fixed direction $u$ and the closest adversarial perturbation, is smaller on the right. Adversarial sparsity is the expected value of $\alpha$ when sampling $u$ uniformly.}
    \label{fig:example}
\end{figure*}

In this work, we propose an alternative approach for measuring adversarial robustness. Rather than measuring whether around an input $x$ there is \textit{at least one} adversarial perturbation, we try to estimate \textit{how many} such perturbations there are, i.e. the \textit{size} of the set of successful adversarial perturbations. In high-dimensional input spaces, there are not many informative and computationally tractable metrics to measure this size. Traditional measure theory hits considerable difficulties: for instance, within an $L_2$-ball of radius $\epsilon$ around $x$, the set of adversarial examples is way too small to be estimated with rejection sampling.

A naive approach would be to revert to evaluating with a set of more or less strong attacks and claim for instance that a model robust to FGSM but not PGD likely has fewer adversarial perturbations than a model robust to neither. However, there are good reasons why this approach was abandoned. Many defenses implicitly rely on gradient obfuscation effects \cite{athalye18} which makes attack convergence artificially difficult without actually defending against perturbations. Using only strong attacks, possibly enhanced with adaptive methods, protects evaluation against such effects. 

In our approach, rather than weakening the attack we propose to constrain it to only look at a subset of the set of admissible perturbations $\Delta$. The question we try to answer is: how large a typical subset must be to find an adversarial perturbation in it? We define metrics quantifying the expected size of the subset: the bigger it is, the fewer perturbations there are. We call this metric \textit{adversarial sparsity}.

A major challenge is to define a distribution of subsets and a metric adapted to the adversarial threat model. In section \ref{sec:metrics}, we define sparsity formally for $L_2$ and $L_\infty$ perturbations. In the $L_2$ case we consider uniform directions $u$ on the $L_2$ sphere, and adversarial sparsity is the expected angle $\widehat{\langle \delta,u \rangle}$ needed to find a successful perturbation. For $L_\infty$ attacks we consider vertices of the $L_\infty$ cube $u \in \{\pm1\}^n$, and measure the number of dimensions $k$ to modify to find a successful perturbation. We also discuss in section \ref{sec:metrics} how adversarial sparsity indeed reflects the size of the successful perturbations set.

Like worst-case accuracy, sparsity is not directly tractable, so we must estimate it using modified PGD attacks on a random sample of directions. Our algorithm is detailed in section \ref{sec:algs}. Armed with this tool we revisit multiple models and defenses proposed in prior works on the CIFAR10 dataset (section \ref{sec:exps}). We both include very strong defenses, taken among the strongest models on the RobustBench leaderboard \cite{croce2021robustbench}; much weaker defenses that are considered ``broken''; and undefended models trained with various data augmentation schemes. Using sparsity we discover or strengthen several properties of these models (section \ref{sec:res}), among which:

\begin{itemize}
    \item Higher adversarial accuracy does \textit{not} necessarily mean much fewer adversarial perturbations: many improvements on robustness benchmarks consist of removing a few residual perturbations around ``almost robust'' inputs.
    \item Most of the defenses ``broken'' by strong attacks do not reduce \textit{at all}  the number of perturbations, indicating that robustness claims on these defenses are spurious.
    \item Data augmentation methods, known to improve the results of adversarial training \cite{rebuffi2021fixing}, in fact increase robustness even \textit{without} adversarial training - though not enough to be reflected in adversarial accuracy.
    \end{itemize}

These findings could lead to promising developments in future robust models. They confirm the interest in alternative evaluation metrics for adversarial robustness.

\section{Related Work}
\label{sec:rel}
\subsection{Weak and strong attacks and defenses}
Multiple adversarial attacks have been proposed since the phenomenon was popularized by \citet{goodfellow13}. Early works include for instance the Fast Gradient-Sign Method (FGSM) attack \cite{goodfellow2014}. However, the two standard attacks for white-box defense evaluation in recent years are Projected Gradient Descent (PGD) \cite{madry18} for norm-bounded attacks, and Carlini\&Wagner (CW) \cite{carlini16} for regularization-based unbounded attacks. 

As it turns out, most defenses can be broken using these attacks, possibly in an adaptive fashion \cite{athalye18}. One recent improvement in attack success was provided by AutoAttack \cite{croce2020reliable}, an ensemble of attacks mostly based on PGD, which adaptively changes hyperparameters during optimization. Other recent attacks typically bring orthogonal improvement, such as perception-aligned attacks \cite{Gopfert19}.

Early work on adversarial attacks and defenses took the form of an arms race. Defenses breaking current state-of-the-art attacks were proposed, like adversarial training \cite{goodfellow2014}, input transformations \cite{xie2017mitigating,Guo2018} or projection to the data manifold \cite{defensegan} only to be broken soon after with a new attack or method \cite{athalye18}. For reasons mentioned above, we focus on the strong PGD attack in this work.

More recent works follow a somehow different trend. On the one hand, numerous proposed defenses are broken with attacks that already exist, for lack of rigorous evaluation. On the other hand, many works focus on promising approaches that have shown to be empirically or sometimes provably robust. Such approaches include randomized smoothing \cite{cohen19}, exact or relaxed certified methods \cite{kolter17}, and PGD-based adversarial training \cite{madry18,Wong20fast} which can be enhanced with data augmentation \cite{gowal2020uncovering,rebuffi2021fixing}. In this work, we explore both adversarial training and older, supposedly broken defenses, and revisit the extent of this non-robustness.

\subsection{Geometry of adversarial examples}
The idea to explore the spherical geometry of adversarial examples has some precedents in the literature. \citet{Tramr2017TheSO} take a linear algebra approach and estimate the dimension of a subspace of adversarial examples around a perturbation. \citet{Khoury2018OnTG}, and to an extent \citet{defensegan} explore the data manifold and attempt to explain adversarial examples as out-of-manifold points that models have trouble 
classifying. Such approaches that consider adversarial perturbations as artifacts have been challenged by works such as \citet{ilyas19} which demonstrate that adversarial perturbations are features that models can learn, and as such are reasonable input points.

We do not make claims about the ``nature'' of adversarial examples, but simply propose metrics to quantify them. This makes this work closer to \citet{Tramr2017TheSO} in its principle, although sparsity is different than dimension estimation. Parallels may also be drawn with Deep Hypersphere Embeddings \cite{Liu2017SphereFaceDH}, which have been shown to have some robustness properties \cite{Pang20}.

\subsection{Alternatives to adversarial accuracy}

The limitations of adversarial accuracy have led several previous works to propose alternative robustness metrics. We discuss those metrics and how they differ from ours.

\paragraph{Minimal perturbation:} it is common to compute the closest adversarial example to a given input as a metric for robustness around this point. Some attack algorithms are based on that approach \cite{carlini16}. This notion differs from sparsity in that it removes the notion of attack bound and perturbation constraint altogether: instead, all points are potential adversarial examples and the attacker finds the "best" one. A model could have a very small minimal perturbation but still have high sparsity for large radii.
\paragraph{Probabilistic robustness:} \citet{robey2022probabilistically} introduce a relaxation on "worst-case" robustness, by training models to be robust to most perturbations sampled uniformly, rather than all of them. Stemming from similar motivations to our work, probabilistic robustness differs from sparsity significantly in its approach, as it is "non-adversarial" robustness. Where we sample geometric regions to constrain the adversary, probabilistic robustness samples perturbations directly. Indeed, \citet{robey2022probabilistically} show that probabilistic robustness and adversarial robustness are largely uncorrelated, contrary to sparsity and accuracy.
\paragraph{Local Intrinsic Dimensionality (LID):} \citet{ma2018characterizing} analyze the local dimension of the submanifold of adversarial perturbations around an input. They use this metric to better understand the properties of adversarial regions. sparsity differs from LID as it does not assume that adversarial examples lie on a manifold. We probe regions and try to find adversarial examples in them, while LID starts from an adversarial example and probes the "adversarial region" around it. These approaches are intuitively complementary, in that sparsity estimates "how many" adversarial regions there are and LID "how big" those regions are. This duality is an interesting research topic for future work.

\section{Adversarial Sparsity}\label{sec:metrics}
\subsection{Definitions}
Throughout the following sections $f$ designates a machine learning model, and $L$ its loss function. $\epsilon$ is the radius of the attack, and $\Delta$ is the set of admissible perturbations. Usually $\Delta = B^n_k$ with $B^n_k$ the n-dimensional hyperball in norm $L_k$:
$$B^n_\infty = [0,1]^n$$
$$B^n_2 = \{x\in \mathbb{R}^n \mid x_1^2+...+x_n^2\leq1\}$$
where $x_j$ is the $i^{\text{th}}$ coordinate of $x$.

To make sparsity computations simpler, we will instead only consider the adversarial examples that use their entire perturbation budget. This means that for $k < \infty$ $\Delta= S^n_k$, using the hypersphere ($\norm{x}=1$) rather than the hyperball ($\norm{x}\leq1$). For $k=\infty$ $\Delta= \{\pm1\}^n$, i.e. we consider only the (finite) set of vertices of the hypercube. This is done with little cost of generality, as strong attacks like PGD nearly always use all of their perturbation budget (and the simpler FGSM attack is constrained to do so).

Given a point $x$ and a radius $\epsilon$, we define the \textit{adversarial set} as the set of successful admissible perturbations:
\begin{equation}\label{eqn:advset}
\text{Adv}(f,x,\epsilon) := \{\delta\in \Delta \mid  f(x+\epsilon.\delta) \neq f(x)\}
\end{equation}
In our convention, adversarial perturbations are always unit vectors, to be scaled by factor $\epsilon$ when applied. Finally, we can redefine adversarial accuracy over a point $x$ within this formalism:
\begin{equation} \label{eqn:acc}
\text{AA}(f,x,\epsilon) := \mathbf{1}[\text{Adv}(f,x,\epsilon) = \varnothing ]
\end{equation}

We will also sample points from probability distributions. $\mathcal{U}(X)$ designates the uniform distribution over measurable set $X$. 

\subsection{Defining sparsity}\label{sec:defspars}
Adversarial sparsity quantifies ``how big'' a subset of $\Delta$ typically needs to be to contain an adversarial perturbation. Formally, we assume access to a sequence of increasing subsets of $\Delta$: with $m_1<m_2$, $\emptyset \subseteq \Delta^{m_1} \subseteq \Delta^{m_2} \subseteq \Delta$. We can define adversarial sparsity relative to $\Delta^m$ as:
\begin{equation} \label{eqn:densu}
\begin{split}
\text{AS}(f,x,\epsilon, (\Delta^m)) := \inf\{m \mid \Delta^{m} \cap \text{Adv}(f,x,\epsilon) \neq \emptyset \}
\end{split}
\end{equation}
If we have a distribution $\mathcal{D}$ of such sequences, we can define adversarial sparsity as the expected value of $AS$
\begin{equation} \label{eqn:dens}
\overline{\text{AS}}(f,x,\epsilon) := \mathbb{E}_{(\Delta^m) \sim\mathcal{D}}[\text{AS}(f,x,\epsilon,(\Delta^m))]
\end{equation}

Intuitively, larger sparsity means more robust models, provided that some reasonable constraints are enforced on distribution $\mathcal{D}$. In particular, the distribution should be isotropic, i.e. not favor any particular direction.

We now give more concrete definitions relative to our two threat models. For $L_2$ perturbations we consider angular constraints. We sample $u\sim \mathcal{U}(S^n_2)$ and for $\alpha \in [0,\pi]$ we define $\Delta^\alpha$ as the spherical cap of direction $u$ and angle $\alpha$, that is the set of admissible perturbations that form an angle at most $\alpha$ with $u$:
\begin{equation}\label{eqn:cap}
\Delta^\alpha  = \text{Sc}(u,\alpha):=  \{\delta\in S_2^n \mid \delta \cdot u \geq \cos{\alpha}\}
\end{equation}

For $L_\infty$ perturbations, we sample both a vertex $u \in \mathcal{U}(\{\pm1\}^n)$ and a permutation of dimensions $\sigma \in \mathfrak{S}_n$. For $m \in \{1,...,n\}$ we define $\Delta^m$ as the set of perturbations differing of $u$ only over dimensions $\sigma(1),...,\sigma(m)$: 
\begin{equation}\label{eqn:cap}
\Delta^m :=  \{\delta\in S_\infty^n \mid \forall k>m \; \delta_{\sigma(k)} = u_{\sigma(k)} \}
\end{equation}

In other words $\Delta^m$ is the set of perturbations differing from $u$ by at most $m$ specific pixels $\sigma(1),...,\sigma(m)$. The permutation $\sigma$ is required to enforce that all dimensions are equally probable under the distribution.

\subsection{Sparsity and number of perturbations}\label{sec:sparsadvsize}

We propose sparsity as an approach to measure the size of the adversarial set. Is there indeed a relationship between the two? Intuitively this depends on how that set itself is distributed over $\Delta$. If for instance $\text{Adv}(f,x,\epsilon)$ is ``one half'' of $\Delta$ (e.g. all perturbations where the bottom left pixel perturbation is positive) then its sparsity will be 0 for half the directions but quite large for many others, leading to an expected sparsity that does not accurately reflect the immense size of this adversarial set. So which conditions should be met for sparsity to be a relevant metric?

Let us consider a simplified case where the set of perturbations is finite: $\text{Adv}(f,x,\epsilon) = \{\delta_1,...,\delta_k\}$. If we assume that the $\delta_i$ are uniformly sampled over the admissible set, then there are no preferred directions of high adversarial concentration, as opposed to the example above. In this case, the link between sparsity and $k$ can be mathematically established.
For $L_2$ perturbations, the expected value of $\text{AS}(f,x,\epsilon)$ when sampling $\delta_j$ is:

\begin{equation}\label{eqn:sparsityproof2}
\mathbb{E}[\text{AS}(f,x,\epsilon)]= \int_0^\frac{\pi}{2} ((1-t_\alpha)^k+(t_\alpha)^k)d\alpha
\end{equation}

with $t_\alpha = I_{sin^2(\alpha)}(\frac{n-1}{2},\frac{1}{2})$ where $I$ is the incomplete regularized beta function. Meanwhile for $L_\infty$ perturbations: 

\begin{equation}\label{eqn:sparsityproofinf}
\mathbb{E}[\text{AS}(f,x,\epsilon)]= \sum_{m=0}^n (1-2^{-m})^k
\end{equation}
In this case, we can also show that :
\begin{equation}\label{eqn:sparsityproofinf}
\frac{n-\log_2k}{4} \leq \mathbb{E}[\text{AS}(f,x,\epsilon)] \leq n-\log_2k + \frac{e}{e-1}
\end{equation}

i.e. sparsity tends to vary like $n-\log_2k$. We defer all proofs to appendix \ref{apx:math}.

In the general case $\text{Adv}(f,x,\epsilon)$ is infinite\footnote{With our definition of the admissible set, $\text{Adv}(f,x,\epsilon)$ is finite in the $L_\infty$ case. But the same reasoning applies if we consider the more general set $\Delta = S^n_\infty$} and we are interested in its volume relative to $\Delta$, rather than its cardinal. Both situations can be linked if considering for instance 
\begin{equation}
\text{Adv}(f,x,\epsilon) = \bigcup_{j=1}^k \text{Sc}(\delta_j,\beta)
\end{equation}
ie $\text{Adv}(f,x,\epsilon)$ is the union of spherical caps of equal radius $\beta$. $\beta$ can be called the ``local adversarial radius'', i.e. how much a typical adversarial perturbation should be changed to recover the original input. In that case, the volume of $\text{Adv}(f,x,\epsilon)$ is equal to $k.V(\text{Sc}(u,\beta))$ (assuming disjoint union) and $\text{AS}(f,x,\epsilon)$ would vary by at most $\beta$ compared to the finite case. In other words, sparsity depends on both $k$ and $\beta$. Therefore comparing the sparsity of two models is akin to comparing the size of their adversarial sets provided that these sets have similar local radii. We implicitly make that assumption in our experiments: estimating the local adversarial radius as well as sparsity would be an interesting extension of this work.

\section{Algorithms}\label{sec:algs}
In this section, we detail the algorithms we use to empirically compute sparsity. We first describe the Projected Gradient Descent (PGD) attack, and the modifications we implement to compute adversarial examples in constrained regions. Then we describe the full sparsity computation method, which applies PGD with multiple constraints. We sum up the full procedure in $L_2$ norm in Algorithm \ref{alg:sparsity}.

\subsection{Projected Gradient Descent}\label{sec:pgd}

The PGD attack \cite{madry18} is a strong first-order attack, i.e. it only uses model gradient information. It optimizes the following non-convex objective: 
\begin{equation}\label{eqn:pgd}
    \arg\max_{\norm{\delta}<\epsilon}L(f(x+\delta),y)
\end{equation}
using projected gradient descent for a number $n$ of gradient update steps. At each step $k+1$ consists of a gradient optimization step followed by a projection step which depends on the attack norm:
\begin{equation}\label{eqn:pgditer}
\delta'_k \leftarrow \delta_{k} + \eta.\text{sign}(\nabla_{\delta_{k}}  L(f(x+\delta_{k}),y)) \;\;\text{(grad. ascent)}
\end{equation}
\begin{equation}\label{eqn:pgdprojlinf}
\delta_{k+1} \leftarrow \text{clip}(\delta'_k,\epsilon) \;\;\text{($L_\infty$ projection)}
\end{equation}
\begin{equation}\label{eqn:pgdprojl2}
\delta_{k+1} \leftarrow \min(1,\frac{\epsilon}{\norm{\delta'_k}_2}).\delta'_k \;\;\text{($L_2$ projection)}
\end{equation}

%By convention, after running PGD we return the unit vector $\frac{\delta_n}{\norm{\delta_n}_2}$. %We refer to this attack output as $PGD(x,\epsilon,u)$ where $u$ is the initial direction $\delta_0$ (sampled randomly from the unit sphere.

\subsection{Constrained PGD}\label{sec:angle}
To practically compute angular sparsity we need to design a constrained attack, able to project not on a hyperball but on the subsets defined in section \ref{sec:defspars}.

For $L_\infty$ attacks this is straightforward: given a vertex $u$, a perturbation $\sigma$, and a number of dimensions $m$, we append after projection an additional step enforcing the constraints:
\begin{equation}\label{eqn:infpgd1}
\forall k>m\;\; \delta_{\sigma(k)} \leftarrow u_{\sigma(k)}
\end{equation}

For $L_2$ attacks, the steps are slightly more complex, as we need an angular projection on the spherical cap of angle $\alpha$ and direction $u$. This requires replacing the projection step in equation \ref{eqn:pgdprojl2} with a projection by both norm and angle. The technical steps of that projection are detailed in Appendix \ref{apx:proj}

We name $\text{PGD}_m$ and $\text{PGD}_\alpha$ these constrained attacks in the following.

\subsection{Computing sparsity over a point}
To estimate sparsity for $L_2$ (resp. $L_\infty$) attacks, given a direction $u$ (resp. $u,\sigma$) we explore possible values of $\alpha$ (resp. $m$) with binary search, and at each step run the constrained attack. We average over multiple sampled directions to estimate $\overline{\text{AS}}$ (in general 100). 

The equivalent $L_\infty$ algorithm is similar, replacing only direction $u$ by $(u,\sigma)$, angle $\alpha$ by a number of pixels $m$ and the constrained $L_2$-PGD by the constrained $L_\infty$ PGD. The time complexity of these algorithms is discussed in Appendix \ref{apx:timecplx}.
\begin{algorithm}
\caption{$L_2$ sparsity computation algorithm\label{alg:sparsity}}
\begin{algorithmic}
\Require model $f$, point $(x,y)$, $K\in \mathbb{N},N \in \mathbb{N}$
\State $k\gets 0$
\State $i \gets 0$
\While{$i \leq N$}
\State $\alpha_0^i \gets 0$, $\alpha_1^i \gets \pi$, $u_i \sim \mathcal{U}(S^n)$
\State $i \gets i+1$
\EndWhile
\While{$k\leq K$}
\State $i \gets 0$
\While{$i \leq N$}
\State $\alpha^i \gets \frac{\alpha_0^ i+\alpha_1^ i}{2}$
\State $x_{\text{adv}} \gets \text{PGD}_{\alpha^i}(f,x,y,u_i)$
\If{$f(x_{\text{adv}}) \neq y$}
    \State $\alpha_1^i \gets \alpha^i$
\Else
    \State $\alpha_0^i \gets \alpha^i$
\EndIf
\State $i \gets i+1$
\EndWhile
\State $k \gets k+1$
\EndWhile
\State \Return $\frac{1}{n}\sum_0^n\alpha^i$
\end{algorithmic}
\end{algorithm}

\subsection{Computing sparsity over a dataset}\label{sec:evalrob}
Adversarial sparsity only makes sense if there \textit{are} adversarial perturbations in the vicinity of an input $x$. It is designed to capture the level of vulnerability in non-robust models. We choose to restrict sparsity computation to the residual subset of inputs where the model is vulnerable. For instance, if a model is robust over $50$ inputs and has sparsity 0.3 over another $50$, we will say it reaches 50\% accuracy and \textit{residual sparsity} 0.3. An alternative could be to default sparsity to a max value when evaluating a model that is robust around $x$ ($\text{Adv}(f,x,\epsilon)=\varnothing$): $\pi$ for $L_2$ attacks, and dimension $n$ for $L_\infty$ attacks. 
\begin{table*}[h!]
\centering
 \begin{tabular}{|@{\makebox[1.5em][r]{\rownumber\space}} |c | c | KKK| KKK|} 
 \hline
 Model & Defenses & Clean acc. & Adv. acc. & Sparsity & Clean acc. & Adv. acc. & Sparsity    \\ [0.5ex] 
 \hline
 
R-18 & None & 88  & 0	& 0.180	 &  88  & 0	 &  56.8 \\
 \hline
 
R-18 & \citet{madry18} (1 step) &  90 & 	62.1 & 	0.559 &   	90.73 & 	40.08 & 	229 \\

R-18 & \citet{madry18} & 88.8	 & 67.3 & 	0.553 & 	80.97 & 	47.19 & 277	 \\

R-18 & \citet{madry18} (w/o aug.) & 82.8 &	60.2 &	0.532 & 73.96 & 	43.02 & 273	 \\

WR-70-16 & \citet{rebuffi2021fixing} (w/ data) & 95.74 & 	82.3 & 	0.581 & 	94.16 &	62.91	 & 213  \\

WR-70-16 & \citet{gowal2020uncovering} (w/ data) & 94.74 & 	80.5 & 	0.590 & 	91.17 & 	62.5 & 	215 \\

WR-70-16 &  \citet{rebuffi2021fixing} & 92.41 & 	80.4 & 	0.545 & 	89.16 & 	65.83 & 	202 \\

WR-28-10 &  \citet{rebuffi2021fixing} & 91.79 & 	78.8 & 	0.529 & 	89.58 & 	61.66 & 	209 \\

R-18 & \citet{Sehwag2021Proxy} & 89.5 & 	73.4 & 	0.539 & 	87.08 & 	52.08 & 	231 \\

R-18 &  \citet{rade2021helperbased} & 90.5 & 	76.2 & 	0.515 & 	92.08 & 	57.5 & 	209 \\

WR-34-10 & \citet{Augustin2020AdversarialRO} & 92.23	 & 76.3 & 	0.525 & 	- & 	-	 & - \\

R-50 & \citet{Augustin2020AdversarialRO} & 91.08 & 	72.9 & 	0.572 & 	- & 	- & 	- \\

WR-34-R & \citet{huang2021exploring} & - & 	- & 	- & 	89.58 & 	57.5	 & 227 \\

WR-34-R & \citet{huang2021exploring} (EMA) & - & 	- & 	- & 	89.17 & 	57.5	 & 230 \\

 \hline
 %ViT & None & 97\% & 0\% & 0.183 \\
  %\hline
 \end{tabular}
 \caption{Evaluation of state-of-the-art adversarially trained models under $L_2$ ($\epsilon=0.5$) and $L_\infty$ ($\epsilon=0.03$) perturbations. We report Natural accuracy (without attack), adversarial accuracy (under AutoPGD), and adversarial (residual) sparsity. Model architectures are either ResNet (R) or WideResNet (WR) . (w/o aug.) means the model was trained without any data augmentation: most models are trained with at least Crop+Resize augmentation. (w/ data) means that external data was used to train the model. For some defenses there is only a $L_\infty$ or only a $L_2$-trained model available; for many both exist, in which case we report the results of both. \label{tab:strongres}}
\end{table*}

\begin{table}[h!]
\centering
 \begin{tabular}{ |c | c | c | c|} 
 \hline
 Defense & Test Accuracy & $L_2$ Sparsity & $L_\infty$ Sparsity    \\ [0.5ex] 
 \hline
 None	&	88 & 0.180 & 56.8 \\
 \hline
 JPEG	&	77.1 &	0.161 &	80.9 \\
FS	&	87.8 &	0.178 &	60.1 \\
SPS	&	74 &	0.147 &	77.7 \\
FS+SPS	&	74.5&	0.174&	86.6 \\
\hline
 \end{tabular}
 \caption{Evaluation of ResNet18  with various ``broken'' defenses under $L_2$ attack with 20 iterations and $\epsilon=0.5$. We report Natural accuracy (without attack) and angular sparsity for $L_2$ ($\epsilon=0.5$) and $L_\infty$ ($\epsilon=0.03$) perturbations. \label{tab:weakres}}
\end{table}

\begin{table}[h!]
\centering
 \begin{tabular}{ |c | c | c | c|} 
 \hline
 Augmentation & Accuracy & $L_2$ Sparsity & $L_\infty$ Sparsity  \\ [0.5ex] 
 \hline
 None	&	88.0&	0.180&	56.8 \\
  \hline
Crop+Resize	&	93.7&	0.225 &	64.0 \\
 Cutmix	 & 	94.58	 & 0.185  & 50.8 \\
50\%	Cutmix  & 	95.83 & 	0.202 & 	57.0 \\
Mixup	 & 	93.75	 & 0.157 & 	71.1 \\
50\%	Mixup   & 	92.5 & 	0.202 & 	69.4 \\
Cutout	 & 	94.16 & 	0.225 & 	62.2\\
50\% Cutout  	 & 	93.75 & 	0.225 & 	62.6 \\
Ricap	 & 	92.2 & 	0.272 & 	105 \\
50\% Ricap  	 & 	91.67 & 	0.253 & 	83.6 \\
\hline
 \end{tabular}
 \caption{Evaluation of ResNet18  with data augmentation schemes. The augmentation is applied either on all inputs or 50\% of inputs \label{tab:augres}}
\end{table}

\section{Experiments} \label{sec:exps}
We run experiments in $L_2$ and $L_\infty$ perturbations on the CIFAR10 dataset \cite{cifar10}. Additional results on ImageNet can be found in Appendix \ref{apx:addexps}.
We use attack radii $\epsilon=0.5$ and $\epsilon=8/255$ respectively.

\subsection{Models and defenses}
We mainly evaluate defenses over a classic residual CNN architecture: ResNet-18 \cite{resnet}. We evaluate larger ResNets and WideResNets as well. We consider multiple defense mechanisms:

\subsubsection{Adversarial training} Adversarial training consists in applying adversarially perturbed inputs during training rather than or along with natural inputs. It is one of the most robust defenses against PGD attacks. 
We use multiple pretrained models using state-of-the-art defenses based on adversarial training \cite{gowal2020uncovering,Augustin2020AdversarialRO,rebuffi2021fixing,rade2021helperbased, huang2021exploring}. These defenses are well ranked in the Robustbench leaderboard \cite{croce2021robustbench} for $L_2$ attacks on CIFAR10. Some use additional extra data for pretraining. We also train ResNet-18 models with PGD training, following \cite{madry18}, with 1 or 10 attack steps.

\subsubsection{Preprocessing}
Early defenses often used input preprocessing during testing to ``erase'' adversarial noise. These defenses have mostly been beaten by stronger or adaptive attacks. We revisit some of them: JPEG compression and decompression \cite{Guo2018}, Feature Squeezing, and Spatial Smoothing \cite{Xu18fs}. The latter two were proposed as complementary defenses. This makes analysis with sparsity particularly interesting to determine whether each defense has a partial effect on robustness and would benefit from ensembling.

\subsubsection{Data augmentation} \cite{xie2017mitigating} suggest that random transformations can mitigate adversarial attacks.  This defense is also considered broken when using strong attacks \cite{athalye18}. Additionally, \cite{rebuffi2021fixing} showed that data augmentation can significantly improve the performance of adversarial training. 

We wonder if data augmentation affects robustness by itself. We train multiple models with standard training and data augmentation schemes used in \cite{xie2017mitigating} and \cite{rebuffi2021fixing}, and evaluate them with sparsity.

\subsection{Attack}

Preprocessing methods (JPEG, Feature Squeezing, Spatial Smoothing) may pose obfuscation problems when computing adversarial attacks during sparsity estimation. Hence, we follow \cite{shin2017jpeg} and use Differentiable JPEG, which we backpropagate through. We use the Straight-Through estimator \cite{athalye18} against the other two.

In addition to sparsity, we evaluate adversarially trained models using the AutoAttack \cite{croce2020reliable} and $\epsilon=0.5$ or $\epsilon=8/255$. To reduce computation time we run a slightly cheaper AutoAttack,  using only APGD-CE and APGD-DLR. All other models (weakly defended or undefended) achieve 0\% adversarial accuracy, which we do not report in result tables.

\section{Results}\label{sec:res}
The results of adversarially defended models are reported in Table \ref{tab:strongres}. We report averaged values of sparsity over the first 1000 vulnerable inputs in the CIFAR10 test set and, for each input, 100 random directions. 

\subsection{Margin of error when estimating sparsity}\label{sec:anadir}
Angular sparsity $\overline{\text{AS}}$ cannot be directly computed but must be approximated with the sample mean estimator over multiple directions. To assess the margin of error in this estimation we compute for multiple models, the standard deviation of $L_2$ sparsity over 100 directions, for 1000 input points. We find that these deviations are consistently lower than 0.022. Using the general formula for margins of errors $z*\frac{\sigma}{\sqrt{n}}$,
we conclude that our estimates with 100 directions provide estimates within a \textbf{$\pm 0.002$} margin with $95\%$ confidence. In Appendix \ref{sec:anapoint} we provide additional information on the sparsity variance, this time with respect to input points.

\subsection{Comparing strong defenses by sparsity}\label{sec:strong}

At first sight, we can already observe in Table \ref{tab:strongres} that sparsity is overall consistent with adversarial accuracy. Adversarially defended models, whose accuracy is above 40\%, have much higher sparsity than the undefended baseline. Models trained with strong adversarial training (lines 3-15) all reach $L_2$ (resp $L_\infty$) sparsity greater than 0.5 (resp 200). In comparison, the baseline model achieves 0.180 (resp. 56.8). % and the model trained with only 1-step attacks achieves 0.304 (resp 98).

However, among the robust models (lines 3-15), sparsity and accuracy variations are not well correlated. In fact, when it comes to $L_\infty$ perturbations, the model with higher accuracy has often \textit{lower} sparsity on the residual subset! We illustrate this phenomenon by plotting for robust models sparsity as a function of accuracy in Figure \ref{fig:scatter}. An explanation would be that improvements in adversarial training from a baseline model A focus on robustly classifying the easiest points, i.e. those for which few perturbations fool A. Those points have higher sparsity - thus classifying them well drops the residual sparsity for the remaining points.

\begin{figure}[t]
    \includegraphics[width=0.5\textwidth]{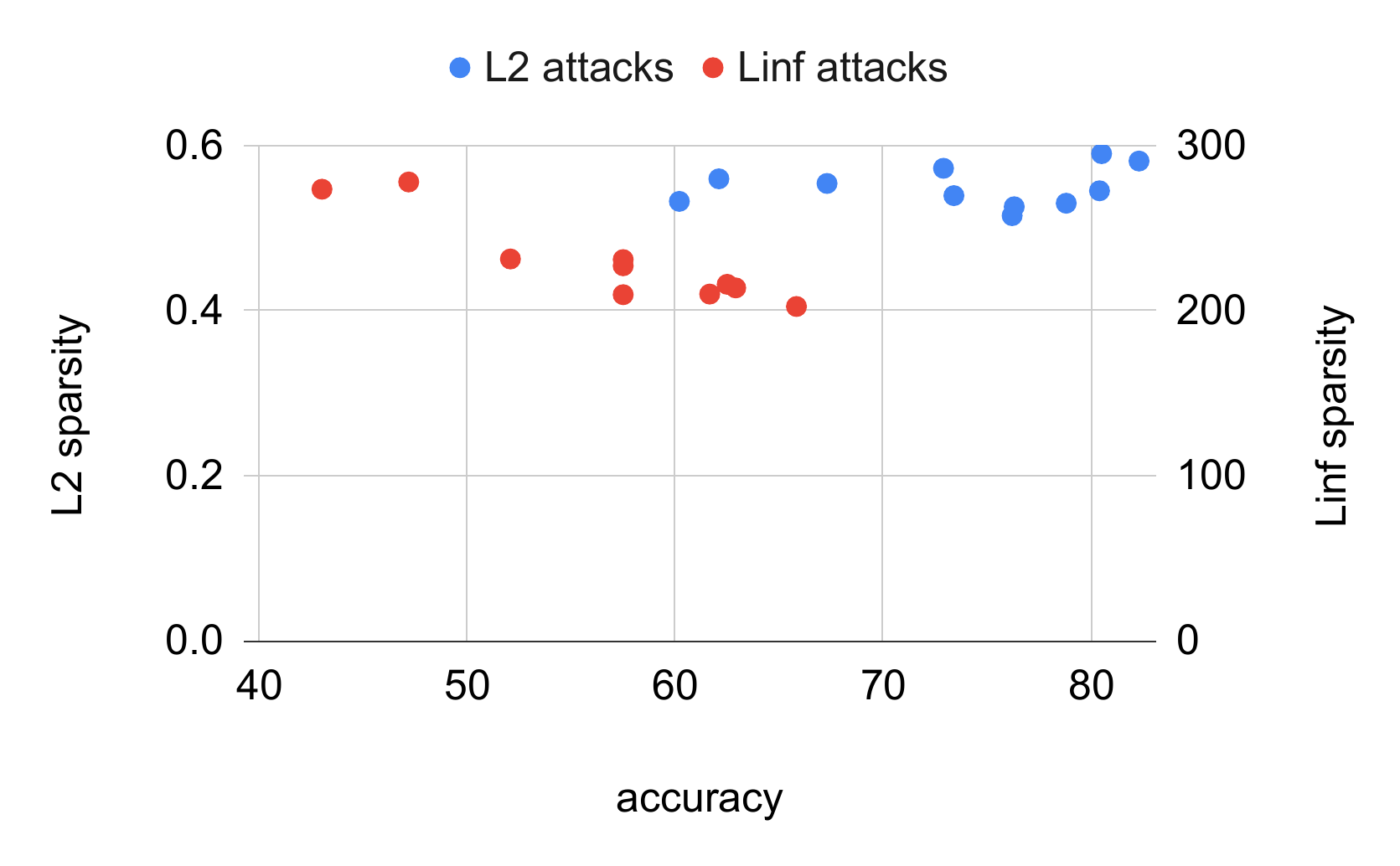}%
    \caption{Sparsity as a function of adversarial accuracy for all robust models, in both $L_2$ norm (blue) and $L_\infty$ norm (red). The values used are the same as in Table \ref{tab:strongres} \label{fig:scatter}}%
\end{figure}%

%The major exception to this phenomenon is improvement with external data (lines , which is the only change between lines 4 and 5, or lines 6 and 8. In these cases, augmentation improves both sparsity and accuracy, indicating that it improves ``all-round'' robustness over all inputs.

\begin{figure*}
     \centering
     \begin{subfigure}[t]{0.49\textwidth}
         \centering
         \includegraphics[width=\textwidth]{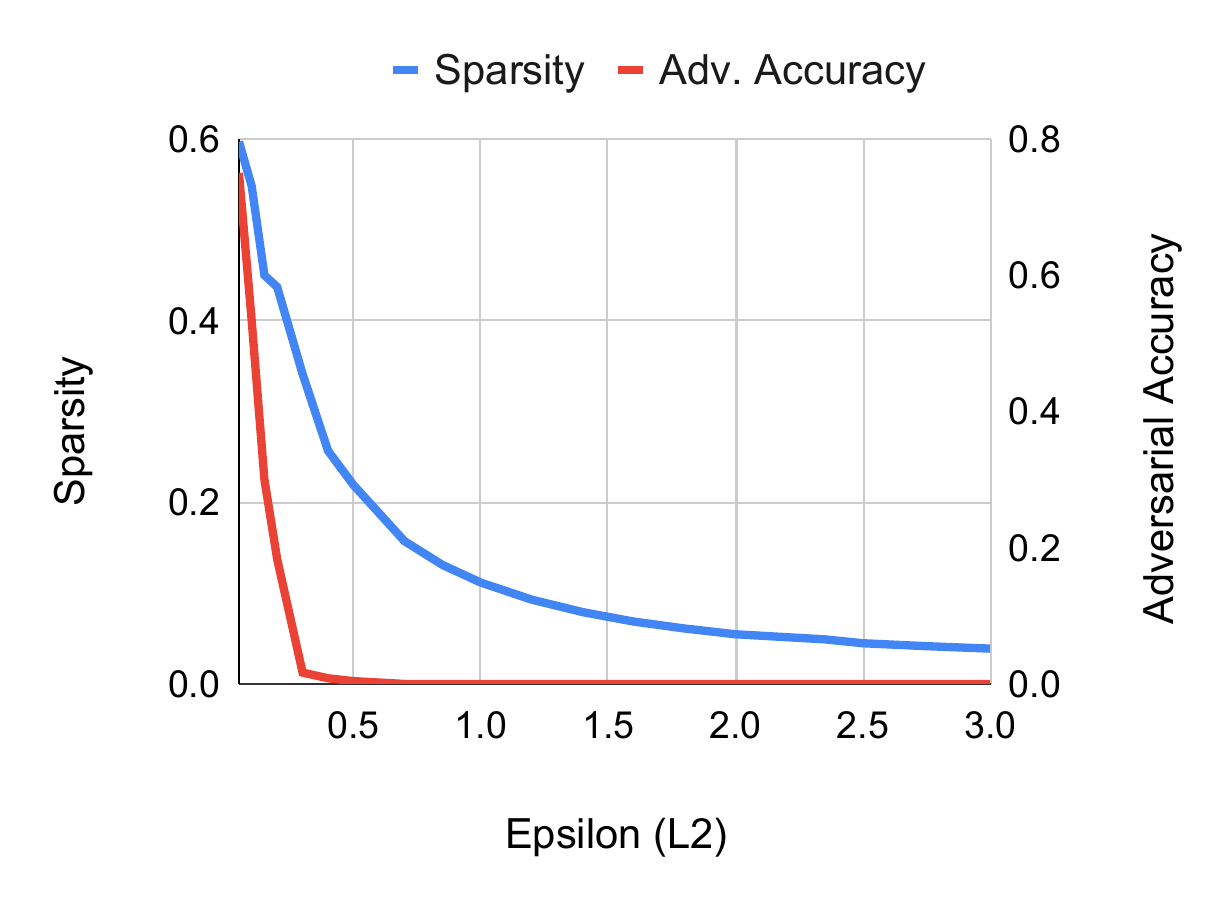}
         \caption{Standard ResNet18 model \label{fig:epsvaryl2}}
     \end{subfigure}
     \hfill
     \begin{subfigure}[t]{0.49\textwidth}
         \centering
         \includegraphics[width=\textwidth]{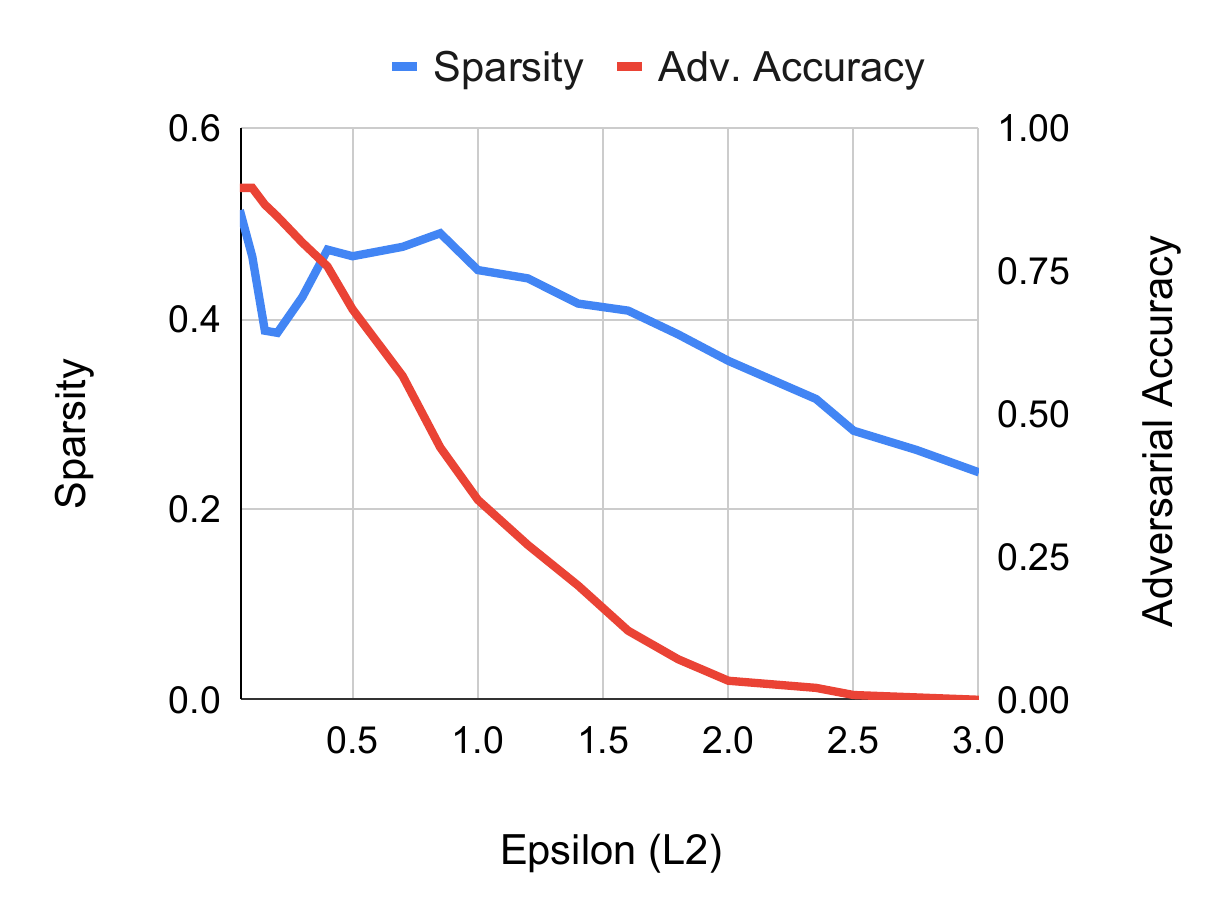}
         \caption{Adversarially trained model (training radius $0.5$) \label{fig:epsvaryl2adv}}
     \end{subfigure}
        \caption{Evolution of $L_2$ sparsity and adversarial accuracy of ResNet18 models as a function of the attack radius $\epsilon$ }
\end{figure*}

\subsection{Comparing broken defenses by sparsity}\label{sec:broken}
In Table \ref{tab:weakres} we compare the sparsity of the vanilla model and models defended with one or more of the ``broken'' preprocessing-based defenses. Using the strong AutoAttack attack rather than the attacks these defenses' authors had used for evaluation, we can break them on all inputs ($0\%$ adversarial accuracy) for both $L_2$ and $L_\infty$ attacks. This weakness to strong attackers was, for all defenses, first pointed out in the articles that successfully broke them \cite{athalye18,He17ensemble}.

The fact that some defenses are effective against weak attacks like FGSM but not against stronger attacks has two possible explanations. One is that such defenses protect against a subset of all existing perturbations, but not all of them: a stronger attacker can find more subtle perturbations that evade the defense. Another is that defenses don't actually protect against perturbations at all, but merely make them perturbations harder to find for attackers by \textit{obfuscating gradients} \cite{athalye18}. 

Adversarial sparsity offers a simple way to discriminate between these two situations. We expect a partially effective defense to increase sparsity, but not accuracy. An obfuscation-based defense on the other hand would not improve either of these metrics.

On $L_2$ attacks, in Table \ref{tab:weakres} we observe that none of these defenses lead to a significant improvement in sparsity. Any $L_2$-robustness claim regarding these methods is therefore likely to be relying on spurious obfuscation effects. Results are however different on $L_\infty$ attack. While Feature Squeezing does not increase sparsity, JPEG compression and spatial smoothing do ($+20.9$ and $+18.7$ respectively). These defenses are therefore effective against some perturbations. Moreover, feature smoothing does boost robustness when combined with Spatial Smoothing, which was one of the claims of \citet{Xu18fs}. 
We argue that the recent claims that most proposed defenses rely solely on subtle obfuscation effects should be revised. In a few examples, we have shown that these defenses offer non-negligible protection against some perturbations. Even if past works have identified obfuscation effects in these defenses using different methods, such as adaptive evaluation \cite{athalye18}, they do not explain all of their effect on robustness. Sparsity is complementary to adaptive attacks when evaluating defenses.

\subsection{Testing data augmentation on standard models}\label{sec:ensemble}

In Table \ref{tab:augres} we report the adversarial sparsity of various models trained with data augmentation. We observe that each of them increases sparsity on both $L_2$ and $L_\infty$ perturbations. The only exception is CutMix \cite{cutmix}. 

Interestingly CutMix is also the best augmentation for robustness when combined with adversarial training, according to \citet{rebuffi2021fixing}. Moreover, RICAP, which largely outperforms all other augmentations for robustness with standard training in our experiments, is the worst-performing one for adversarial training! A possible explanation is that despite improving robustness, data augmentation interferes with adversarial training: the best augmentations for robustness won't necessarily combine harmoniously with the best training schemes. This explanation is consistent with past works which had shown that augmentation leads to no robustness improvement \cite{Rice2020OverfittingIA}. One of the contributions of \citet{rebuffi2021fixing} was to overcome this issue with methods like weight averaging. 

Our results suggest that the potential of RICAP augmentation for robustness is still underexplored. Given its considerable effect on the distribution of adversarial examples, it is likely that a version of RICAP+Adversarial training could outperform CutMix+Adversarial training if the aforementioned interference effects are overcome.

\subsection{Influence of the attack radius}

Another interesting application of sparsity is to provide a smoother robustness metric than accuracy. We illustrate this in Figure \ref{fig:epsvaryl2}, where we plot both sparsity and accuracy for $L_2$ attacks for increasing values of the attack radius $\epsilon$, and a standard ResNet18. We observe that while accuracy drops to zero after a small value of $\epsilon$, sparsity decreases much more slowly, only converging to 0 asymptotically.

This property lets us compare models on a much larger range of perturbations. In Figure \ref{fig:epsvaryl2adv} we plot the equivalent plot for an adversarially trained ResNet18. When comparing Figures 2a and 2b, on large $\epsilon$ values accuracy provides little information, being null or close to null for both models. On the other hand, we easily observe that sparsity is over five times higher for the adversarially trained model. This shows that training models adversarially with a given attack radius is beneficial even for much larger radii.

In Figure \ref{fig:epsvaryl2adv} we also observe an interesting fluctuation in the sparsity curve. Although sparsity overall decreases when the attack radius increases overall, the training radius $\epsilon=0.5$ appears to be a local sparsity maximum. One interpretation is that for some points, the effect of adversarial training is to remove adversarial perturbations from the unit hypersphere, but not inside it. This leads to a strange effect where the model gets a bit more vulnerable when decreasing the attack radius. Because adversarial perturbations in sparsity are restricted to points on the hypersphere, it is particularly affected. 

In Appendix \ref{apx:linfradius} we provide equivalent plots for $L_\infty$ attacks and $L_\infty$ adversarial training, with very similar observations.

\section{Conclusion}
We have proposed a novel robustness metric named adversarial sparsity. We have shown that it is complementary to adversarial accuracy, offering additional insight into both weak and strong defenses. By applying it to data-augmented models we have found evidence suggesting that some augmentation methods still retain an untapped potential to increase robustness. 

While accuracy (certified or not) should likely remain the primary metric for benchmarking strong defenses, we believe that using finer metrics in the research process can benefit the research field. An important direction for future work could be to extend sparsity to threat models beyond norm-bounded perturbations, such as human-perception-based attacks or common corruptions.

\bibliography{example_paper}
\bibliographystyle{icml2023}

%%%%%%%%%%%%%%%%%%%%%%%%%%%%%%%%%%%%%%%%%%%%%%%%%%%%%%%%%%%%%%%%%%%%%%%%%%%%%%%
%%%%%%%%%%%%%%%%%%%%%%%%%%%%%%%%%%%%%%%%%%%%%%%%%%%%%%%%%%%%%%%%%%%%%%%%%%%%%%%
% APPENDIX
%%%%%%%%%%%%%%%%%%%%%%%%%%%%%%%%%%%%%%%%%%%%%%%%%%%%%%%%%%%%%%%%%%%%%%%%%%%%%%%
%%%%%%%%%%%%%%%%%%%%%%%%%%%%%%%%%%%%%%%%%%%%%%%%%%%%%%%%%%%%%%%%%%%%%%%%%%%%%%%
\newpage
\appendix

\section{Projection on a spherical cap}\label{apx:proj}
The computation of $L_2$ sparsity requires us to replace the projection step in PGD with a projection on a spherical cap (section \ref{sec:angle}). This projection consists of the following steps:
\begin{equation}\label{eqn:apgd1}
p \leftarrow \delta'_k - \delta'_k\cdot u . u
\end{equation}
\begin{equation}\label{eqn:apgd2}
r \leftarrow min(\frac{\norm{\delta'_k}_2}{\norm{p}_2}\sin{\alpha},1)
\end{equation}
\begin{equation}\label{eqn:apgd3}
s \leftarrow \delta'_k\cdot u . u + r.p
\end{equation}
\begin{equation}\label{eqn:apgd4}
\delta_{k+1} \leftarrow \frac{\min(\epsilon,\norm{\delta'_k}_2)}{\norm{s}_2}s
\end{equation}

The output of Eq \ref{eqn:apgd3} is a linear combination of $\delta'_k$ and $u$ (with positive coefficients) whose angle with $u$ is $\min(\widehat{\delta'_k,u},\alpha)$. We then rescale this vector as in standard PGD to obtain $\delta_{k+1}$. In Figure \ref{fig:angle} we visually illustrate how this procedure returns the closest perturbation to $\delta'_k$ of angle at most $\alpha$ with $u$.

\begin{figure}[t]
    \includegraphics[width=0.4\textwidth]{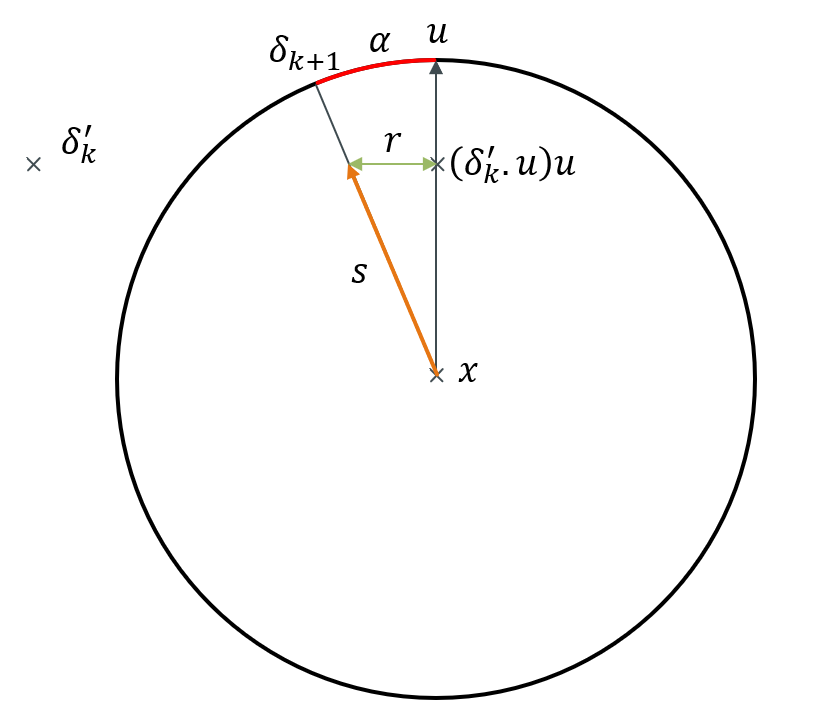}%
    \caption{Illustration of the angular projection steps on a 2-dimensional sphere, with $\epsilon=1$. $\delta'_k$ is the perturbation before projection, $u$ is the direction of the cap, and $\delta_{k+1}$ is the final perturbation, projected both in norm and angle. \label{fig:angle}}%
\end{figure}%
\section{Time complexity of sparsity computation}\label{apx:timecplx}
On CIFAR10, computing adversarial sparsity around an input point with 100 directions, 10 search steps, and 20 PGD iterations takes a few seconds for a ResNet-18 model on an Nvidia RTX 2080 Ti. For reference, we find it shorter than running the AutoPGD attack \cite{croce2020reliable} with default hyperparameters in its worst-case scenario (i.e. when there is no adversarial example).

\subsection{Dependence on input dimension}
Everything else equal, the duration of sparsity computation in a given direction is proportionate to the duration of one backward pass on the model. It is therefore not significantly longer on ImageNet than on CIFAR10 when the same model architecture is used, the only difference being the input size. Of course, reaching good performance on ImageNet requires (to this day) larger models than on CIFAR10: the current state-of-the-art models have about 2B parameters \cite{Dosovitskiy2021AnII}, while our largest models in this work have less than 100M. This would impact the practical cost of sparsity computation. 

One could also think that the number of samples required to confidently evaluate sparsity is much larger in higher dimensions. However, our algorithm does not require to sample in ``every'' direction, or enough to break the ``curse of dimensionality''. Sparsity in practice has low variance with respect to the direction (see Section \ref{sec:anadir}). Therefore only a few samples are required to estimate with a reasonable margin of error.

\subsection{Improving the complexity of sparsity}
While attacks over multiple directions can be computed in batches, binary search constitutes the major bottleneck of this algorithm. Some heuristics could help speed up sparsity computation. For example, we could use batched n-Ary search with fewer directions to find a first approximation of sparsity, then confirm/refine it with more directions. As we have mostly worked with reasonably sized models and inputs we have not experimented with such heuristics and leave them as future work.
\begin{table*}[t]
\centering
 \begin{tabular}{|@{\makebox[1.5em][r]{\rownumber\space}} |c | c | c c c|} 
 \hline
 Architecture & Defenses & Natural accuracy & Adv. accuracy & Sparsity    \\ [0.5ex] 
 \hline
 ResNet-50 & None & 87.8\% & 0\% & 0.194 \\ [0.5ex] 
 ResNet-18 & \cite{Salman2020} & 52.9\% & 39.5\% & 0.426 \\ [0.5ex] 
 ResNet-50 & \cite{Salman2020} & 64.0\% & 53.4\% & 0.415 \\ [0.5ex] 
 \hline
 \end{tabular}
 \caption{Evaluation of defended and undefended ImageNet models under $L_2$ attack with 20 iterations and $\epsilon=1.0$. We report Natural accuracy (without attack), adversarial accuracy (AutoPGD), and adversarial angular (residual) sparsity. \label{tab:imgnet}}
\end{table*}

\section{Additional experiments}\label{apx:addexps}
\subsection{ImageNet}
$L_2$ attacks on ImageNet are not as popular as on CIFAR10. We however apply them, both for reference and to verify that angular sparsity computation scales reasonably to a larger input size. We evaluate pretrained ResNet models provided in the Robustbench framework. One is trained in a standard fashion and the other adversarially against $L_\infty$ perturbations, following \cite{Salman2020}. All take inputs of size 224x224. We apply $L_2$ perturbations of radius $\epsilon=1.0$. Everything else is similar to the CIFAR10 experiments.

We report the results in Figure \ref{tab:imgnet}. We report higher robustness (under all metrics) than we did for CIFAR10; this is consistent with the fact that $\epsilon=1.0$ in a 224x224 vector space allows a smaller perturbation per pixel budget than $\epsilon=0.5$ in a 32x32 space. In the absence of $L_2$-robust, easily available ImageNet models we chose a small value of $\epsilon$. %Our observations on the effect of adversarial training (increases sparsity) and model capacity (increases accuracy but decreases sparsity) are still valid on ImageNet.

\subsection{Visual Transformers}
We evaluate a B-16 Visual Transformer Architecture (ViT), pretrained on ImageNet and fine-tuned on CIFAR10 \cite{Dosovitskiy2021AnII}. Recent works have shown that these models learn different features than Convolutional Neural Networks \cite{Raghu2021} which raises the question of their behavior against adversarial examples. In fact, some works have already claimed that ViTs are more robust to attacks than CNNs \cite{Shao21}. Investigating these claims is an interesting use case of adversarial sparsity.

We evaluate this ViT model on $L_2$ perturbations with the same parameters as in Section \ref{sec:res}. It reaches a natural accuracy of 97\%, greater than any of our ResNet-18 models, and an adversarial accuracy of 0\%. Its $L_2$ sparsity is 0.183, almost equal to that of the ResNet-18 model trained with data augmentation (line 3). The training recipe we use to fine-tune the model employs similar augmentation methods. This suggests that ViTs and ResNets behave similarly against the same $L_2$ PGD threat model. This challenges the conclusions of \cite{Shao21} on the adversarial robustness of ViTs. %The results are however not incompatible, as they evaluate models against $L_\infty$ attack. Moreover, they use adversarial accuracy as a metric, varying only the radius: their observation was that ViTs are more resilient to attacks with very small radii than ResNets -- like $0.001$, which is largely below the perceptibility level for $L_\infty$ attacks. This can be seen as a way to reduce the attack strength and does not reflect well the performance against strong attacks. Our results show that at equal and standard ($L_2$) radius, ViTs, and ResNets have a similar distribution of adversarial examples.

\subsection{Point-wise variance}\label{sec:anapoint}
When varying the input point there are significant variations in the value of the metrics. Taking the vanilla model evaluated over 100 inputs, for $L_2$  (resp. $L_\infty$) sparsity we observe a standard deviation of $0.144$ (resp. $52.8$) with respect to data points. Keeping in mind the considerations in Section \ref{sec:sparsadvsize}, this hints that the adversarial set is considerably larger for some points than others. The deviation is even larger for adversarially trained models. This is clearly visible in Figure \ref{fig:histogram}, where we plot the histogram of sparsity values of standard and adversarially trained ResNet18 models.

\begin{figure*}
     \centering
     \begin{subfigure}{0.9\linewidth}
         \centering
         \includegraphics[width=\linewidth]{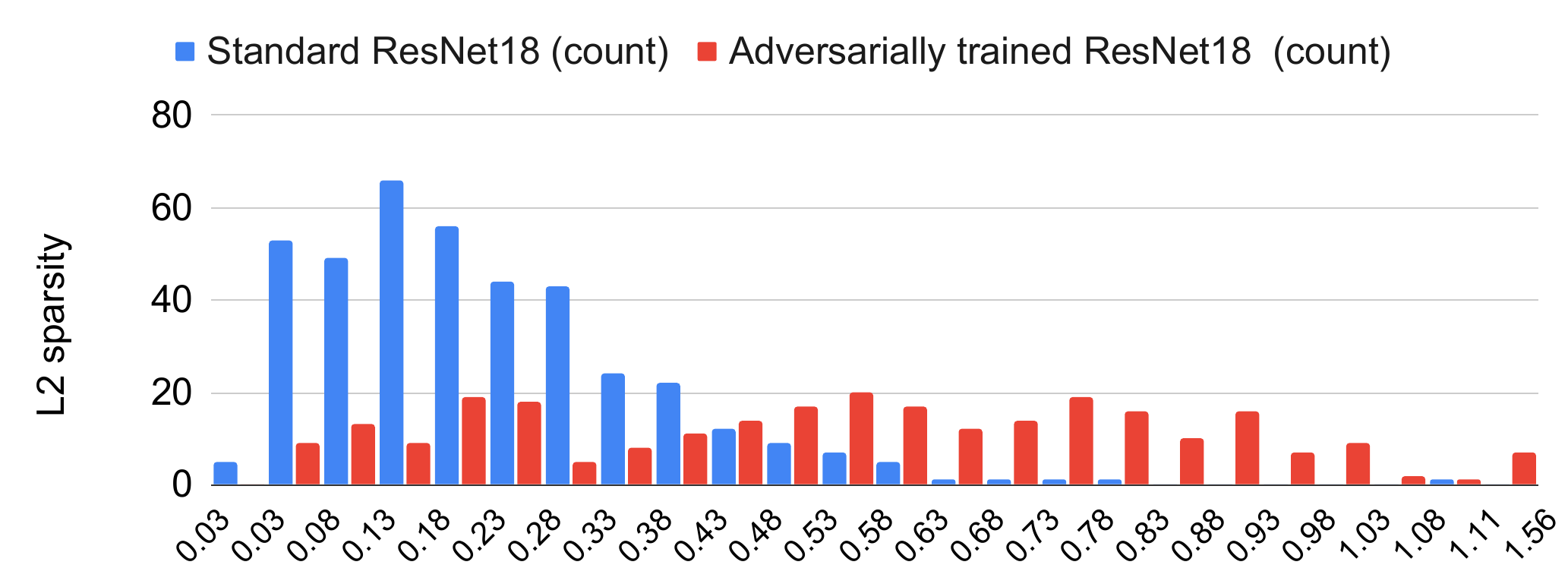}
         \caption{$L_2$ sparsity \label{fig:histl2}}
     \end{subfigure}
     \hfill
     \begin{subfigure}{0.9\linewidth}
         \centering
         \includegraphics[width=\linewidth]{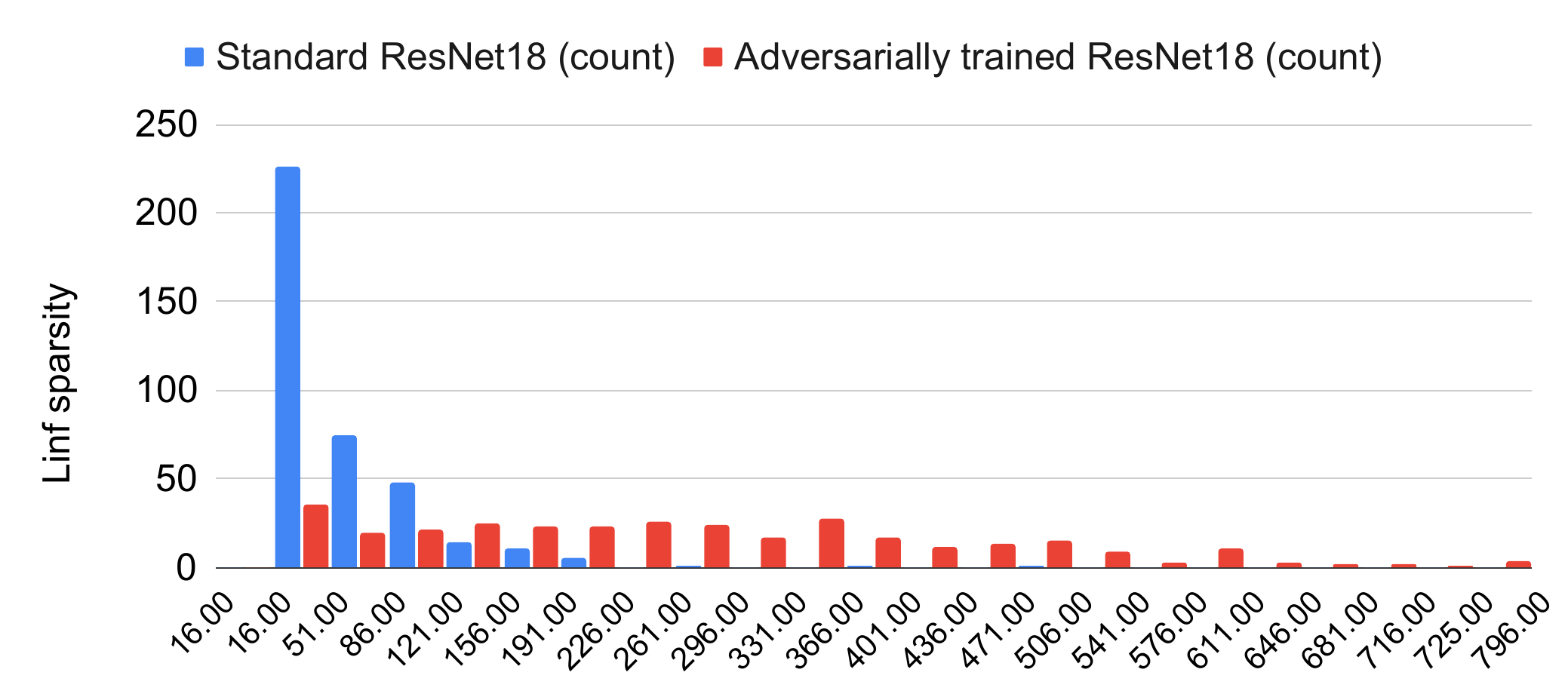}
         \caption{$L_\infty$ sparsity  \label{fig:histlinf}}
     \end{subfigure}
     \caption{Histogram of sparsity values for standard and adversarially trained ResNet18 models. It illustrates that sparsity variance with respect to input points is very large, especially for adversarially trained models. \label{fig:histogram}}
\end{figure*}

Interestingly, there seems to be a correlation between sparsity on the vanilla model, and accurate prediction on a robust (adversarially trained) model. Using a threshold-based classifier on the vanilla model sparsity to predict whether the robust classifier predicts them correctly, we can reach a precision of 67\% at Equal Error Rate. This demonstrates an additional property of adversarial sparsity: to discriminate points that are ``easy'' to learn with a robust decision boundary (the points with a sparse adversarial set) from harder ones.

\begin{figure}
     \centering
     \begin{subfigure}{0.49\textwidth}
         \centering
         \includegraphics[width=\textwidth]{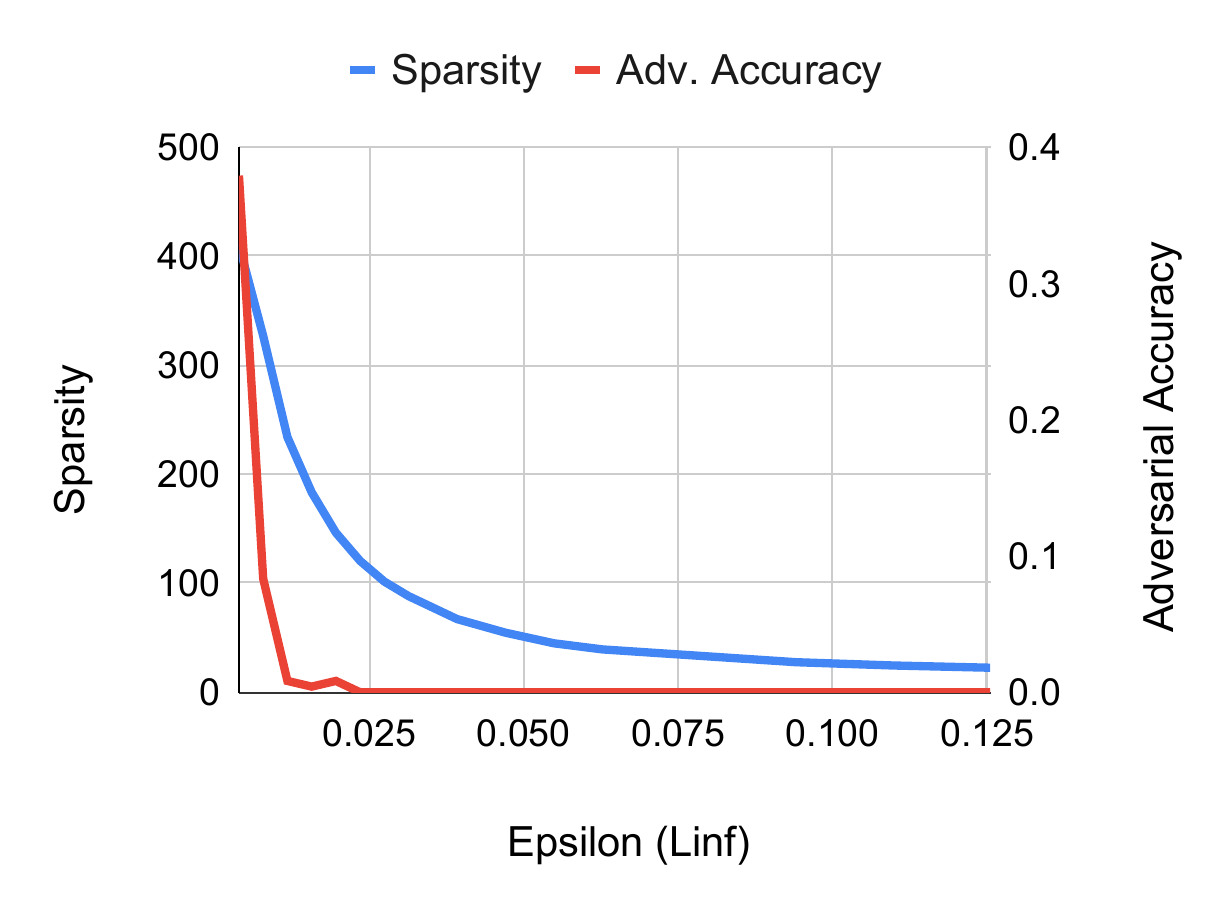}
         \caption{Standard ResNet18 model \label{fig:epsvarylinf}}
     \end{subfigure}
     \hfill
     \begin{subfigure}{0.49\textwidth}
         \centering
         \includegraphics[width=\textwidth]{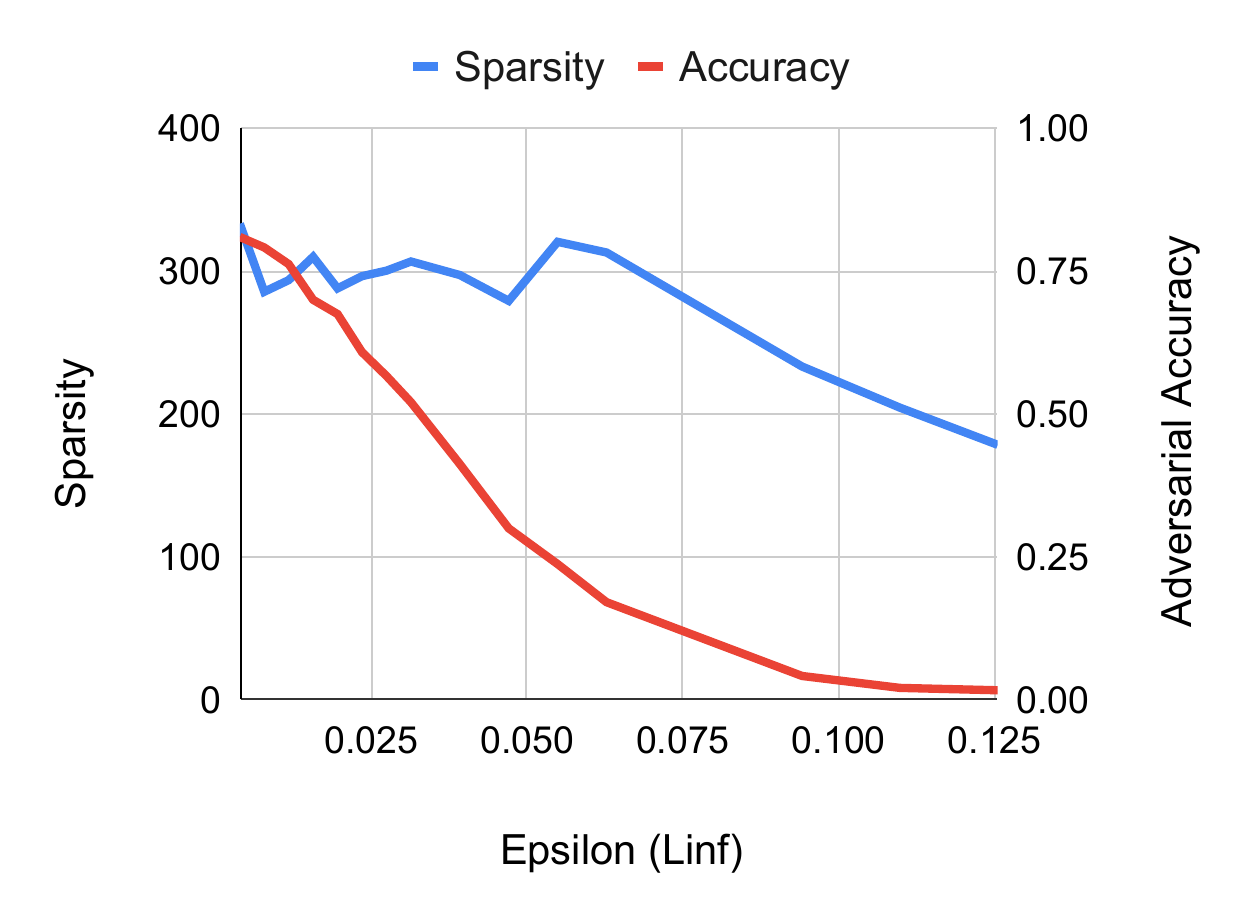}
         \caption{Adversarially trained model (training radius $.0314$) \label{fig:epsvarylinfadv}}
     \end{subfigure}
     \caption{Evolution of $L_\infty$ sparsity and adversarial accuracy of ResNet18 models as a function of the attack radius $\epsilon$ }
\end{figure}

\section{Influence of attack radius on $L_\infty$ sparsity}\label{apx:linfradius}
Figures \ref{fig:epsvarylinf} and \ref{fig:epsvarylinfadv} illustrate the evolution of $L_\infty$ sparsity as a function of the attack radius, for a standard and adversarially trained model respectively.

\section{A Theoretical Setting for sparsity}\label{apx:math}
We now formalize and prove the results mentioned in section \ref{sec:sparsadvsize} linking adversarial sparsity to the number of perturbations. $f$,$x$, $\epsilon$ and $n\geq3$ are fixed. 

\begin{proposition}
Let $\mathbb{\mu}$ be the probability measure associated with the uniform distribution over admissible set $\Delta$. Let $\mathcal{D}$ a distribution of sequences of subsets $\Delta^m \subset \Delta$, indexed on $M$. Assume $\mathcal{D}$ is such that the volume of $\Delta^m$ only depends on $m$: $$\frac{\mu(\Delta^m)}{\Delta}=g(m)$$ Assume $\text{Adv}(f,x,\epsilon) = \{\delta_j,1\leq j\leq k\}$ with $(\delta_j)$ uniformly sampled iid. over $\Delta$ Then we have: 
$$\mathbb{E}_{(\delta_j)\sim\mathcal{U}(\Delta)}[\text{AS}(f,\epsilon,x)] = \int_M(1-g(m))^kdm$$
\end{proposition}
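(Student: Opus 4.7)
The plan is to recognize this as a standard application of the layer-cake formula for non-negative random variables combined with the independence of the samples $\delta_j$. Since $AS(f,x,\epsilon,(\Delta^m))$ takes values in $M \subseteq \mathbb{R}_{\geq 0}$, I would start by writing
\begin{equation*}
\mathbb{E}[AS(f,x,\epsilon,(\Delta^m))] = \int_M \Pr[AS(f,x,\epsilon,(\Delta^m)) > m]\, dm,
\end{equation*}
where the randomness is over the $\delta_j \sim \mathcal{U}(\Delta)$ (and, as will be shown, the outer expectation over $\mathcal{D}$ is trivial). In the $L_\infty$ case where $M$ is discrete, this integral is interpreted as a sum, and the same argument goes through verbatim.

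The second step would be to characterize the event $\{AS > m\}$ concretely. By the monotonicity assumption $\Delta^{m_1} \subseteq \Delta^{m_2}$ for $m_1 \leq m_2$, the infimum defining $AS$ exceeds $m$ if and only if $\Delta^m \cap Adv(f,x,\epsilon) = \emptyset$, that is, none of the $\delta_j$ lies in $\Delta^m$. Using the iid uniform sampling and the assumption $\mu(\Delta^m)/\mu(\Delta) = g(m)$, I would compute
\begin{equation*}
\Pr[\delta_j \notin \Delta^m] = 1 - g(m),
\end{equation*}
and independence of the $\delta_j$ yields $\Pr[\Delta^m \cap Adv = \emptyset] = (1-g(m))^k$. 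The crucial point is that this probability depends on the sequence $(\Delta^m)$ only through the volumes $g(m)$, which by hypothesis are a deterministic function of $m$.

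Combining the two steps, for every sequence $(\Delta^m)$ in the support of $\mathcal{D}$ we obtain the same value
\begin{equation*}
\mathbb{E}_{(\delta_j)\sim\mathcal{U}(\Delta)}[AS(f,x,\epsilon,(\Delta^m))] = \int_M (1-g(m))^k\, dm,
\end{equation*}
so averaging over $\mathcal{D}$ preserves the identity and gives the claim (interpreting $AS$ as the $\overline{AS}$ from Eq.~\ref{eqn:dens} if one averages over $\mathcal{D}$ as well). The main subtlety I anticipate is not the calculation itself, which is essentially a one-line Fubini, but rather verifying the measurability prerequisites so that the tail-sum formula applies: specifically, one needs $m \mapsto \mathbf{1}[\Delta^m \cap Adv \neq \emptyset]$ to be a measurable function on $M$, which follows from the monotonicity of the filtration together with the fact that the indicator is a non-decreasing step function in $m$. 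A secondary bookkeeping point is ensuring the convention that $AS = \sup M$ when $Adv = \emptyset$ integrates to the correct ``max'' value used in the residual-sparsity definition, but under the hypothesis that $\{\delta_j\}$ is non-empty ($k \geq 1$) this boundary case does not interfere with the identity above.
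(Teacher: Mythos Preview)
Your proposal is correct and follows essentially the same argument as the paper: both apply the tail-integral formula $\mathbb{E}[AS]=\int_M\Pr[AS>m]\,dm$, identify the event $\{AS>m\}$ with $\{\forall j,\ \delta_j\notin\Delta^m\}$, and use the i.i.d.\ assumption to obtain $(1-g(m))^k$. The only cosmetic difference is that the paper introduces auxiliary variables $X_j=\inf\{m:\delta_j\in\Delta^m\}$ and swaps the two expectations via Fubini, whereas you work directly with the event and observe that the inner expectation is already constant in $(\Delta^m)$.
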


\begin{proof}
Let us note $X_j = \inf\{m,  \delta_j\in\Delta^{m} \}$, such that 
$$\text{AS}(f,x,\epsilon, (\Delta^m)) = \inf_{1\leq j\leq k}X_j$$

Recall that:

\begin{equation}
\begin{aligned}
&\mathbb{E}_{(\delta_j)\sim\mathcal{U}(\Delta)}[\text{AS}(f,\epsilon,x)]\\
&= \mathbb{E}_{(\delta_j)\sim\mathcal{U}(\Delta)}[\mathbb{E}_{(\Delta^m)\sim\mathcal{D})}[\text{AS}(f,\epsilon,x,(\Delta^m))]] \\
&= \mathbb{E}_{(\Delta^m)\sim\mathcal{D})}[\mathbb{E}_{(\delta_j)\sim\mathcal{U}(\Delta)}[\text{AS}(f,\epsilon,x,(\Delta^m))]] \\
&= \mathbb{E}_{(\Delta^m)\sim\mathcal{D})}[\int_M \mathbb{P}_{(\delta_j)\sim\mathcal{U}(\Delta)}[\text{AS}(f,\epsilon,x,(\Delta^m))> m]dm] \\
\end{aligned}
\end{equation}

Note that 
\begin{equation}
\begin{aligned}
\mathbb{P}_{(\delta_j)\sim\mathcal{U}(\Delta)}[X_j> m] &= \mathbb{P}_{(\delta_j)\sim\mathcal{U}(\Delta)}[\delta_j\notin\Delta^{m}] \\
&= \frac{\mu(\bar{\Delta^m)}}{\mu(\Delta)}=1-g(m)
\end{aligned}
\end{equation}
Moreover, since $X_1,...,X_k$ are independent,
\begin{equation}
\begin{aligned}
&\mathbb{P}_{(\delta_j)\sim\mathcal{U}(\Delta)}[(\inf_{1\leq j\leq k}X_j)> m] \\
&= \mathbb{P}_{(\delta_j)\sim\mathcal{U}(\Delta)}[X_1>m \land ... \land X_k>m] \\
&= \Pi_{1\leq j\leq k}\mathbb{P}_{(\delta_j)\sim\mathcal{U}(\Delta)}[\delta_j\notin\Delta^{m}] \\
&=(1-g(m))^k
\end{aligned}
\end{equation}

It follows

\begin{equation}
\begin{aligned}
&\mathbb{E}_{(\delta_j)\sim\mathcal{U}(\Delta)}[\text{AS}(f,\epsilon,x)]\\
&= \mathbb{E}_{(\Delta^m)\sim\mathcal{D})}[\int_M (1-g(m))^kdm] \\
&= \int_M (1-g(m))^kdm \\
\end{aligned}
\end{equation}
\end{proof}

In the $L_2$ case $M = [0,\pi]$, $\Delta = S^n_2$  and $\Delta^\alpha$ is a spherical cap of angle $\alpha$. When $\alpha\leq \frac{\pi}{2}$ formula expressing the area of a spherical cap is derived in \cite{Li2011ConciseFF} and equal to:
\begin{equation}\label{eqn:capsurface}
    A(\alpha) = \frac{\pi^{\frac{n}{2}}}{\Gamma(\frac{n}{2})}I_{\sin{\alpha}^2}(\frac{n-1}{2},\frac{1}{2})
\end{equation}
where $\Gamma$ is the Gamma function and $I$ is the regularized incomplete beta function. Given that $I_1(a,b)=1$ it follows that: 
\begin{equation}\label{eqn:areacap2}
g(m) = g(\alpha) = t_\alpha:= I_{\sin{\alpha}^2}(\frac{n-1}{2},\frac{1}{2})
\end{equation}.

When $\frac{\pi}{2}<\alpha<\pi$ the cap of angle $\alpha$ is merely the complementary set on the sphere of the cap of angle $\pi-\alpha$. Therefore $g(\alpha)=1-g(\pi-\alpha)$ and:

\begin{equation}\label{eqn:sparsityproof2}
\begin{split}
\mathbb{E}[\text{AS}(f,x,\epsilon)]&= \int_0^\pi ((1-g(\alpha))^kd\alpha\\
&=\int_0^\frac{\pi}{2} ((1-t_\alpha)^k+t_\alpha^k)d\alpha
\end{split}
\end{equation}

In the $L_\infty$ case, $M=\{0,...,n\}$, $\Delta=\{\pm1\}^n$, $\Delta^m=\{\delta\in\Delta \mid \forall q>m\;\; \delta_{\sigma(q)}=u_{\sigma(q)}\}$. Therefore $g(m)=2^{m-n}$ and 
$$\mathbb{E}[\text{AS}(f,\epsilon,x)] = \sum_{m=0}^n(1-2^{m-n})^k = \sum_{m=0}^n(1-2^{-m})^k$$

We now show that $\mathbb{E}[\text{AS}(f,\epsilon,x)] \in \Theta(n-\log_2k)$. On the one hand, $$ln((1-2^{-m})^k=k.ln(1-2^{-m})\leq -k.2^{-m}$$
from which it follows:

\begin{equation}
\begin{aligned}
\mathbb{E}[\text{AS}(f,\epsilon,x)] &\leq \sum_{m=0}^{\lfloor\log_2k\rfloor}e^{-2^{\log_2k-m}}+\sum_{m=\lfloor\log_2k\rfloor+1}^{n}(1-2^{-m})^k\\
&\leq e^{-1}+...+e^{-m} + (n-\lfloor\log_2k\rfloor).1\\
&\leq n-\log_2k+1 + \frac{1}{e-1}\\
&= n-\log_2k + \frac{e}{e-1}\\
\end{aligned}
\end{equation}

On the other hand, $$k.ln(1-\frac{1}{k}) \geq k.(1-\frac{1}{1-\frac{1}{k}})= \frac{-k}{k-1}$$
Thus for $k\geq2$

\begin{equation}
\begin{aligned}
\mathbb{E}[\text{AS}(f,\epsilon,x)] &\geq \sum_{m=\lfloor\log_2k\rfloor+1}^{n}(1-2^{-m})^k \\
&\geq (n-\lfloor\log_2k\rfloor).(1-2^{-\lfloor\log_2k\rfloor-1})^k \\
&\geq (n-\log_2k).(1-\frac{1}{k})^k \\
&\geq (n-\log_2k).e^{-\frac{k}{k-1}} \\
&\geq (n-\log_2k).e^{-2} \\
\end{aligned}
\end{equation}

Since $(1-\frac{1}{2})^2=\frac{1}{4}$,$(1-\frac{1}{3})^3=\frac{8}{27}$ and for $k\geq4$ $e^{=\frac{k}{k-1}}\geq e^{\frac{4}{3}}>\frac{1}{4}$ we can conclude:

$$\mathbb{E}[\text{AS}(f,\epsilon,x)] \geq \frac{(n-\log_2k)}{4}$$
This also applies when $k=1$, since $\mathbb{E}[\text{AS}(f,\epsilon,x)]=n-\sum_{m\leq n}2^{-m}\geq n-2 \geq \frac{n}{4}$

%%%%%%%%%%%%%%%%%%%%%%%%%%%%%%%%%%%%%%%%%%%%%%%%%%%%%%%%%%%%%%%%%%%%%%%%%%%%%%%
%%%%%%%%%%%%%%%%%%%%%%%%%%%%%%%%%%%%%%%%%%%%%%%%%%%%%%%%%%%%%%%%%%%%%%%%%%%%%%%

\end{document}